\documentclass[conference]{IEEEtran}

\usepackage{graphicx}
\usepackage{algorithm,algpseudocode}
\usepackage{amsmath}
\usepackage{amsfonts}
\usepackage{booktabs}
\usepackage{hyperref}
\usepackage{xcolor}
\usepackage{tikz}
\usetikzlibrary{calc}
\usetikzlibrary{positioning}
\usepackage{comment}
\usepackage[table]{xcolor}
\usepackage{multirow}

\algnewcommand\algorithmicinput{\textbf{Input:}}
\algnewcommand\Input{\item[\algorithmicinput]}

\usepackage{amssymb,amsthm}

\newtheorem{proposition}{Proposition}

\IEEEoverridecommandlockouts

\title{\LARGE \bf 
CAST: Alternating State-Value Targets and Expanded Policy Gradients for Model-Based Reinforcement Learning}

\author{Pietro Noah Crestaz$^{1,2}$, Mohamed Yassine Kabouri$^{2,3}$, Nicolas Mansard$^{2,4}$, and Andrea Del Prete$^{1}$
\thanks{$^{1}$Industrial Engineering Department, University of Trento, Trento, Italy.}%
\thanks{$^{2}$LAAS-CNRS, Université de Toulouse, CNRS, Toulouse, France.}%
\thanks{$^{3}$Machines in Motion Laboratory, New York University, New York, USA.}%
\thanks{$^{4}$Artificial and Natural Intelligence Toulouse Institute (ANITI), Toulouse.}%
\thanks{$\star$Corresponding author: {\tt\small pietronoah.crestaz@unitn.it}. Project page: \url{https://pietronoah.github.io/cast/}}%

}

\begin{document}

\maketitle

\begin{abstract}
    Model-based reinforcement learning (MBRL) is a family of RL methods that learn a model of the environment and use it for action selection, making it well suited to robotics due to its sample efficiency. Combining learned models with online planning can further improve action selection, as the planner can exploit the model to find better actions than the learned policy alone. Recent methods combining learned policies with online planning typically learn the value of the policy rather than the stronger planner-guided behavior. We present CAST (Critic with Alternating State-value Target), which uses planner-guided behavior to improve value learning while regularizing the value estimate with the current policy. CAST replaces the action-value critic with a state-value critic, trained using a target that combines a real planner-guided transition and an imagined transition under the current policy. The resulting value function corresponds to an alternating process between planner-guided behavior and the current policy, allowing it to benefit from the stronger planner behavior while being regularised by the policy being learned. We evaluate CAST on the DeepMind Control and HumanoidBench Suites against several state-of-the-art methods, and demonstrate successful transfer to a physical Unitree Go2 quadruped performing a dynamic handstand.
\end{abstract}

\section{Introduction}
\label{sec:intro}

Reinforcement Learning (RL) has emerged as a powerful framework for learning complex robotic behaviors directly through interaction with the environment \cite{Sutton1998}. It has enabled learning skills such as dexterous manipulation \cite{openai_dexterous} and legged locomotion \cite{rudin2021learning}. However, applying RL in the real world remains challenging due to its high sample complexity. Model-based reinforcement learning (MBRL) addresses this limitation by learning a model of the environment and using it for planning and policy optimization, reducing the number of real-world interactions required. Planning-based MBRL performs online trajectory optimization on the learned model, evaluating and refining candidate action sequences before execution, providing a mechanism for improving action selection beyond the learned policy alone~\cite{schrittwieser2020muzero,silver2017alphazero}.

Recent methods combine online planning with an explicitly learned policy. MOPAC~\cite{mopac2021} learns both a $Q_{\pi_\theta}$ and a $V_{\pi_\theta}$, using MPPI rollouts to augment the training data for a model-free actor-critic policy. LOOP~\cite{sikchi2022loop} builds a planner on top of SAC~\cite{haarnoja2018sac}, while TD-MPC~\cite{hansen2022tdmpc}  and TD-MPC2~\cite{hansen2024tdmpc2} jointly learn a latent world model, a policy, and an action-value critic $Q(z,a)$ through temporal-difference (TD) learning, with the critic also providing the terminal value for planning. At run-time, the planner is used in a MPC setting and acts as a refinement $\beta$ of the learned policy $\pi_\theta$.

\begin{figure}[t]
\centering
\includegraphics[width=\columnwidth]{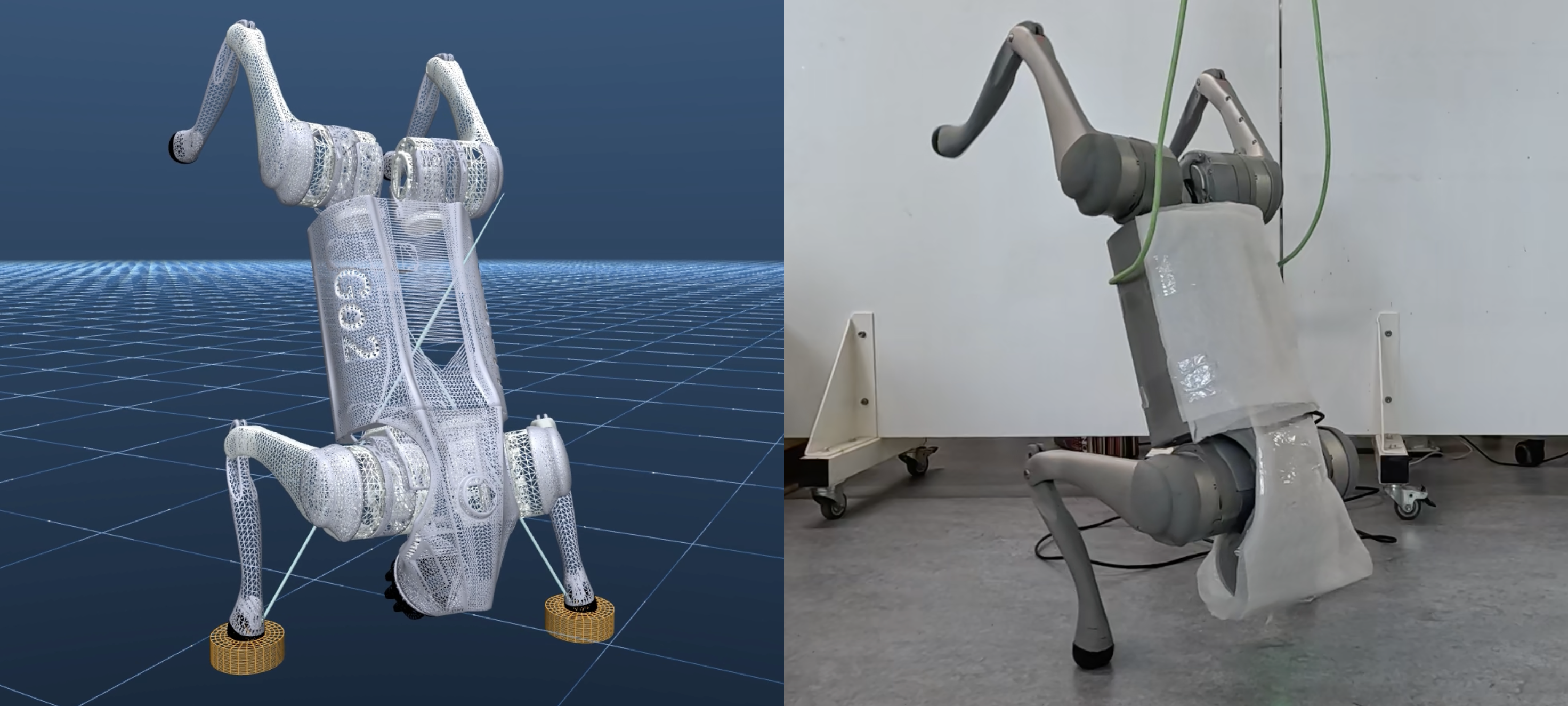}
\caption{\textbf{CAST sim-to-real transfer} on the Unitree Go2 quadruped performing a dynamic handstand. MuJoCo simulation (left) and real-robot execution (right).}
\label{fig:cast}
\end{figure}

Recent methods train the critic to learn the value of the optimized policy $\pi_\theta$. They differ mainly in how they close the gap between $\pi_\theta$ and the planner-augmented policy $\beta$ to address the well-known issue of distribution mismatch. BOOM~\cite{zhan2025boom} introduces a likelihood-free alignment loss that pulls the policy toward the planner's non-parametric action distribution. BMPC~\cite{wang2025bmpc} instead replaces critic-based policy improvement with behavior cloning (BC), using a state-value critic learned on-policy. Both methods therefore learn the value of $\pi_\theta$, and only differ in how they push $\pi_\theta$ closer to $\beta$.

We start instead from a simpler observation. The behavior policy $\beta$ (the planner) generally outperforms the learned policy $\pi_\theta$ \cite{hansen2024tdmpc2, wang2025bmpc}, especially in the early training stage. We argue that the critic should therefore learn the value of $\beta$ rather than the value of $\pi_\theta$, so that policy improvement can draw on a better critic. However, since $\beta$ generally outperforms $\pi_\theta$ by a wide margin, bootstrapping directly from $\beta$ for policy improvement yields large policy updates and destabilizes training, a common failure mode in RL methods. We address this with a regularized TD-target construction that mixes one real transition under $\beta$ with one imagined transition under $\pi_\theta$. Under fixed policies and exact dynamics, the resulting Bellman operator has a unique fixed point corresponding to an alternating $\beta \rightarrow \pi_\theta \rightarrow \beta \rightarrow \pi_\theta$ process.

This choice also raises a second question, whether this critic should be an action-value $Q_\beta(z,a)$ or a state-value $V_\beta(z)$. Learning $Q_\beta$ is impractical, since $\beta$'s action distribution at a state is only available by re-running the planner, requiring a new planner query for every TD target rather than just reusing stored transitions. A state-value critic $V_\beta(z)$ avoids this, depending only on the states $\beta$ visits, already recorded in the replay buffer. MODIP~\cite{modip2026} switches the learned policy for a diffusion policy, trained with BC over $\beta$, and trains the critic directly on planner-generated transitions, in order to avoid querying the policy during planning. The state-value formulation also changes policy optimization, since $V(z)$ takes no action as input and we cannot differentiate it directly with respect to the policy action as in the $Q(z,\pi_\theta(z))$ based objective. We instead reconstruct a one-step action-value estimate from the learned reward and dynamics models, and unroll $\pi_\theta$ through the model for $k$ steps before bootstrapping with $V$.

We build this formulation into TD-MPC2~\cite{hansen2024tdmpc2}, retaining its latent world model and MPPI planner, while replacing the action-value critic with an ensemble state-value critic. We call the resulting method \textbf{CAST} (\textbf{C}ritic with \textbf{A}lternating \textbf{S}tate-value \textbf{T}arget), a novel MBRL algorithm that replaces the action-value critic with a state-value critic trained toward the value of the behavior policy $\beta$. The key contributions are:

\begin{enumerate}

\item A regularized hybrid Bellman target with a characterized fixed point, combining a real transition under $\beta$ with an imagined transition under $\pi_\theta$;

\item A $k$-step model-based policy-improvement objective that unrolls $\pi_\theta$ through the learned dynamics before bootstrapping.

\end{enumerate}

CAST is validated first in simulation against five representative RL baselines on 14 high-dimensional control task, and finally on physical hardware by deploying it on a Unitree Go2 quadruped.

\section{Related Work}
\label{sec:related}

Planning with learned world models has become a central approach in MBRL. Sampling-based methods such as PlaNet~\cite{hafner2019planet} and TD-MPC/TD-MPC2~\cite{hansen2022tdmpc,hansen2024tdmpc2} use learned dynamics and reward models to generate predictive trajectories and bootstrap finite-horizon trajectories with learned value estimates. Other approaches, such as MuZero~\cite{schrittwieser2020muzero} and EfficientZero~\cite{wang2024efficientzerov2} combine learned latent dynamics with tree search. Among these methods, TD-MPC2 is most closely related to our setting. It combines latent world models, MPPI planning, and an action-value critic that is used both as the terminal reward for planning and as the policy improvement signal.

Our work is also related to recent methods that address the mismatch between the learned policy and the planner in planning-based RL. Replay data are generated by the planner-augmented behavior policy $\beta$, while gradient-based updates optimize the parametric policy $\pi_\theta$, introducing an inherent distribution mismatch. BOOM~\cite{zhan2025boom} addresses this mismatch by aligning the learned policy with the planner distribution using a forward KL divergence in the actor loss. The concurrent TD-MPC$^2$ policy-alignment method~\cite{lin2025tdmpc2policy} similarly regularizes the policy distribution toward planner actions, but uses a reverse KL divergence. BMPC~\cite{wang2025bmpc} instead uses a state-value critic trained using an on-policy model-based target and a Behavior Cloning (BC) loss for the actor. 

Our work also employs a state-value critic, but with a different goal. Rather than aligning $\pi_\theta$ with $\beta$ or imitating $\beta$'s actions, we aim to have the critic represent the value of $\beta$ directly, since $\beta$  generally outperforms $\pi_\theta$. Using $V_\beta$ to bootstrap the policy optimization, however, performs poorly in practice, as we show in Sec.~\ref{sec:ablations}. We therefore define a regularized hybrid Bellman target consisting of one replay transition under $\beta$ followed by one imagined transition under $\pi_\theta$, and characterize the value function to which this training procedure converges.

The policy optimization component of CAST is related to imagination-based RL. Dreamer~\cite{hafner2020dreamer} optimize the policy by differentiating through latent trajectories generated by learned dynamics, following the pathwise-gradient view of deterministic policy gradients~\cite{lillicrap2016ddpg}. DMO~\cite{amigo2025dmo} also propagates policy gradients through a learned dynamics model, while using a separate model for high-fidelity rollouts. These methods differ from TD-MPC2 in that online planning is not the central mechanism used during environment interaction. In TD-MPC2, the learned policy mainly provides an initial distribution for MPPI, while the planner produces the final actions executed in the environment. CAST combines this planning setup with model-based policy optimization by propagating gradients through multiple imagined transitions before bootstrapping with the state-value critic.

State-value functions have also been used in settings where explicit action-value estimation is undesirable or unnecessary. Implicit Q-Learning~\cite{kostrikov2022iql} learns a state-value function through expectile regression to approximate an upper expectile of $Q$, avoiding explicit maximization over out-of-distribution actions in offline RL, although it still learns an action-value function. POLO~\cite{lowrey2019polo} and other value-function-based MPC methods~\cite{crestaz2026tdcdmppi} use a learned state-value function as the terminal cost for model predictive control, which is closer to our use of $V(z)$ for planning. MODIP~\cite{modip2026} similarly replaces a policy-dependent action-value function with a terminal state value to avoid querying a diffusion policy during planning, reducing inference time. Our setting differs in that replacing $Q(z,a)$ with $V(z)$ is not just an architectural choice, but comes from our aim to learn the value of $\beta$ rather than $\pi_\theta$. 
Learning $Q$ of $\beta$ would indeed be extremely computationally demanding, requiring to explore both state and action spaces, while querying the planner $\beta$ for the construction of TD targets.
Learning $V$ is instead much cheaper, as it requires only the exploration of the state space, and we can still estimate $Q$ exploiting our knowledge of $V$ and the learned reward/dynamic model. CACTO~\cite{cacto, cacto-bic} similarly trains the critic on cost-to-go along TO solutions without querying the actor.

Finally, learned world models have also been deployed on physical robotic systems. MoDem-V2~\cite{lancaster2023modemv2} demonstrate TD-MPC-style world models for real-world manipulation, while subsequent work has explored adapting pretrained world models to hardware~\cite{feng2023finetuning}. RT-HCP~\cite{rt-hcp2025} addresses a closely related problem, compensating for inference delays that exceed the robot's control frequency.

\section{Method}

\begin{figure*}
    \centering
    \includegraphics[width=\textwidth]{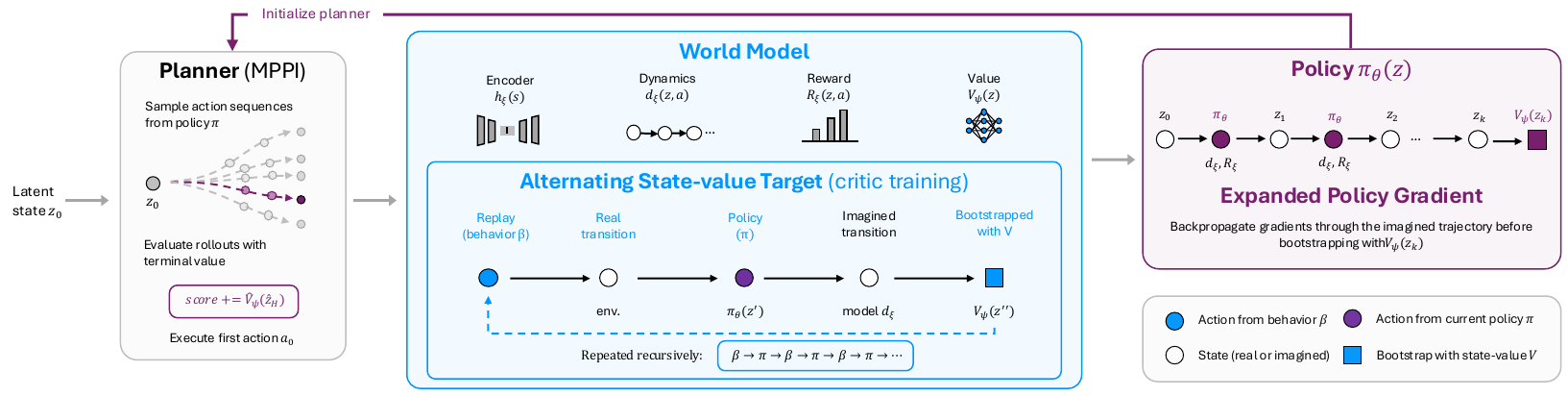}
    \caption{\textbf{Overview of CAST}. A world model learns latent dynamics, rewards, and a state-value critic trained with \textbf{alternating state-value targets} that combine a replay transition with an imagined policy transition. The planner evaluates candidate trajectories using the terminal state value, while the policy is optimized through \textbf{expanded policy gradients} propagated across imagined rollouts.}
\end{figure*}

\subsection{Background: TD-MPC2}
\label{sec:background}

We build on TD-MPC2~\cite{hansen2024tdmpc2}, which jointly learns a latent world model and a policy and interacts with the environment through online planning. Observations are mapped to a latent state $z$ using an encoder $z=h_\xi(s)$, while a latent dynamics head $z'=d_\xi(z,a)$ and reward head $r=R_\xi(z,a)$ predict the next latent state and reward without decoding back to observation space. An ensemble of $N{=}5$ action-value functions $\{Q^i_\psi(z,a)\}_{i=1}^N$ and a stochastic maximum-entropy Gaussian policy $\pi_\theta(a\mid z)$ complete the model. All components are MLPs with LayerNorm and Mish activations. The latent state is projected onto a product of simplices using SimNorm~\cite{lavoie2022simnorm} to keep its scale bounded and sparse. Reward and value targets are represented as soft cross-entropy distributions over symlog-spaced bins~\cite{bellemare2017c51} rather than raw scalars, making training less sensitive to the scale of the task reward. A target ensemble $\bar Q_\psi$ is maintained using Polyak averaging,
$\bar\psi \gets \tau\psi + (1-\tau)\bar\psi$,
and TD targets use the minimum of two randomly sampled ensemble members.
Given a $H$-step trajectory $(s_t,a_t,r_t,s_{t+1})_{t=0}^{H}$ sampled from the replay buffer, the encoder, dynamics, reward, and value heads are trained jointly by minimizing
\begin{equation}
\begin{split}
\mathcal L_{\text{model}} = &\mathbb E\Big[\textstyle\sum_{t=0}^{H}\rho^t\big(|d_\xi(z_t,a_t)-\mathrm{sg}(h_\xi(s_{t+1}))|2^2 \\
&+ \mathrm{CE}(R_\xi(z_t,a_t),r_t) + \mathrm{CE}(Q_\psi(z_t,a_t),y_t)\big)\Big],
\end{split}
\label{eq}
\end{equation}
where $\mathrm{sg}$ denotes the stop-gradient operator, $\rho\in(0,1]$ discounts errors further along the rollout, and $\mathrm{CE}$ is cross-entropy against a two-hot target. The one-step TD target for the action-value function is
\begin{equation}
y_t = r_t + \gamma,\bar Q_\psi\big(z_{t+1},\tilde a_{t+1}\big),
\qquad
\tilde a_{t+1}\sim\pi_\theta(\cdot\mid z_{t+1}).
\label{eq}
\end{equation}
This implies that TD-MPC2 learns the value of the neural policy $\pi_\theta$, as this is the policy used to sample $a_{t+1}$. However, the transitions in the replay buffer are not generated by $\pi_\theta$.
At each control step, actions are selected using the Model Predictive Path Integral (MPPI) planner \cite{williams2017mppi}. The planner samples horizon-$H$ action sequences from a diagonal Gaussian $\mathcal N(\mu,\sigma^2)$, rolls them out through $d_\xi$, and scores each sequence using
\begin{equation}
\hat G(z_t, a_{t:t+H}) =
\sum_{h=0}^{H-1}\gamma^h R_\xi(z_{t+h},a_{t+h})
+
\gamma^H \bar Q_\psi\big(z_{t+H}, a_{t+H}\big),
\label{eq}
\end{equation}
where the terminal action $a_{t+H}\sim\pi_\theta(\cdot\mid z_{t+H})$ is sampled from the policy. A fraction of the sampled sequences are initialized from the policy prior, so $\pi_\theta$ contributes a subset of the candidate rollouts at each planning step. An elite set is selected from the rollout batch, and the mean of the sampling distribution is updated using a softmax over the elite scores. This procedure is repeated for a fixed number of iterations, after which the planner executes the first action of the resulting sequence. The policy prior is trained separately using a maximum-entropy objective similar to standard Soft Actor-Critic~\cite{haarnoja2018sac},
\begin{equation}
\begin{split}
\mathcal L_\pi(\theta) = -\mathbb E\Big[\textstyle\sum_{t=0}^{H} \rho^t \big(&\alpha\,\mathcal H(\pi_\theta(\cdot\mid z_t)) \\
&+ Q_\psi(z_t,\tilde a_t)\big)\Big],
\end{split}
\label{eq}
\end{equation}
where $\tilde a_t\sim\pi_\theta(\cdot\mid z_t)$, $\alpha$ is a fixed entropy coefficient, and gradients are taken only with respect to $\theta$.

\subsection{From Action-Values to State-Values}
\label{sec:qtov}

Our method replaces the action-value critic with a state-value critic. For any stochastic policy $\pi$, the corresponding action-value and state-value functions satisfy
\begin{align}
Q_\pi(z,a)
&=
R(z,a)
+
\gamma V_\pi\big(d(z,a)\big),
\label{eq:bellman_q}\\
V_\pi(z)
&=
\mathbb E_{a\sim\pi(\cdot\mid z)}
\big[Q_\pi(z,a)\big].
\label{eq:bellman_v}
\end{align}
Here, the transition dynamics is deterministic, while the expectation in~\eqref{eq:bellman_v} is taken over the stochastic policy.

Following the value averaging of TDMPC2 \cite{hansen2024tdmpc2}, we similarly learn an ensemble of state-value functions $\{V^i_\psi(z)\}_{i=1}^N$ with the same architecture as the original TD-MPC2 critic. A target ensemble $\{\bar V_\psi^i\}_{i=1}^N$ is maintained using the same Polyak averaging rate $\tau$. Whenever an action-value estimate is needed, we reconstruct a one-step estimate from the learned reward, dynamics models and  target state-value function,
\begin{equation}
\hat Q(z,a)=R_\xi(z,a)+\gamma\bar V_\psi\big(d_\xi(z,a)\big).
\label{eq:composed_q}
\end{equation}
This composed estimate is the starting point to define the value-learning target and to construct the policy objective in Sec.~\ref{sec:expanded}.

Using a state-value critic also changes the terminal objective used by the planner. Rather than sampling a terminal action and evaluating $\bar Q_\psi(z_{t+H},a_{t+H})$, CAST uses the state-value estimate directly,
\begin{equation}
\hat G(z_t, a_{t:t+H})= \sum_{h=0}^{H-1}\gamma^h R_\xi(z_{t+h},a_{t+h})+\gamma^H\bar V_\psi\big(z_{t+H}\big).
\label{eq:vpi_value}
\end{equation}
This removes the need to sample a terminal policy action during planning.

\subsection{A Hybrid Value Target: Mixing Behavior and Policy}
The first design choice in CAST is how to train $V_\psi$. We aim for the critic to represent the value of $\beta$ rather than of $\pi_\theta$, but bootstrapping directly from the value of $\beta$ performs poorly in practice because it leads to abrupt policy updates that destabilize training (see ablations in Sec.~\ref{sec:ablations}). We therefore combine one real transition generated by $\beta$ with one imagined transition generated by the learned policy $\pi_\theta$.

Starting from a replay transition $(s_t,r_t,s_{t+1})$, we train the critic against the following two-step target using the same soft cross-entropy formulation used by TD-MPC2:
\begin{equation}
y_t=r_t+\gamma\Big[ R_\xi(z_{t+1},\tilde a_{t+1})+\gamma\,\bar V_\psi(\tilde z_{t+2})\Big],
\label{eq:y_2step}
\end{equation}
where $z_{t+1} = h_\xi(s_{t+1})$, $\tilde a_{t+1} \sim \pi_\theta(\cdot\mid z_{t+1})$, and $\tilde z_{t+2}= d_\xi(z_{t+1},\tilde a_{t+1})$.
The first step is the real transition stored in the replay buffer, while the second step is generated under the current policy using the learned model. 
The value function therefore corresponds to the alternating policy sequence
\begin{equation}
\beta\rightarrow\pi_\theta\rightarrow\beta\rightarrow\pi_\theta\rightarrow\cdots.
\label{eq:alternating_policy_sequence}
\end{equation}
The critic is neither the value function of $\beta$ nor that of $\pi_\theta$ alone. Instead, it is the value function induced by their alternating composition.

This construction regularizes the critic toward $V_\beta$ without collapsing onto it entirely. The replay transition keeps the target anchored to the stronger behavior policy $\beta$, while the imagined transition reintroduces $\pi_\theta$ every other step, preventing the large, destabilizing policy updates that a critic trained purely on $\beta$ would otherwise produce.

\subsection{Theoretical Analysis of Hybrid Value Targets}
We next characterize formally the value function represented by this target. Assume that $\beta$ and $\pi_\theta$ are fixed, the environment and learned dynamics are deterministic, and the value function can be represented without approximation error. Let $\beta(a\mid z)$ denote the stochastic behavior policy, which is stochastic due to the stochastic nature of the planner, and $\pi_\theta(a\mid z)$ the stochastic learned policy. The corresponding one-step Bellman operators are
\begin{align}
(\mathcal T_\beta f)(z)&=\mathbb E_{a\sim\beta(\cdot\mid z)}\left[R(z,a)+\gamma f\big(d(z,a)\big)\right],
\label{eq:t_beta}
\\
(\mathcal T_\pi f)(z)&=\mathbb E_{a\sim\pi_\theta(\cdot\mid z)}\left[R(z,a)+\gamma f\big(d(z,a)\big)\right].
\label{eq:t_pi}
\end{align}
Although the dynamics are deterministic, both operators remain stochastic through their respective action distributions.

The target in~\eqref{eq:y_2step} corresponds to the composition
\begin{equation}
\mathcal T_{\mathrm{CAST}}=\mathcal T_\beta\mathcal T_\pi.
\label{eq:operator_composition}
\end{equation}
\begin{proposition}
\label{prop:cast_target}
Assume that $\beta$ and $\pi_\theta$ are fixed, the environment and learned dynamics are deterministic, and the value function can be represented without approximation error. Then $\mathcal T_{\mathrm{CAST}}$ is a $\gamma^2$-contraction under the supremum norm and admits a unique fixed point $V_{\text{CAST}}$ satisfying
\begin{equation}
V_{\text{CAST}}=\mathcal T_\beta\mathcal T_\pi V_{\text{CAST}}.
\label{eq:hybrid_fixed_point}
\end{equation}
\end{proposition}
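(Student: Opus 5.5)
We work on the Banach space $\mathcal B$ of bounded real-valued functions on the latent space, equipped with the supremum norm $\|f\|_\infty=\sup_z|f(z)|$. The plan is to show that each one-step operator $\mathcal T_\beta$ and $\mathcal T_\pi$ in~\eqref{eq:t_beta}--\eqref{eq:t_pi} is a $\gamma$-contraction on $\mathcal B$. Their composition~\eqref{eq:operator_composition} is then a $\gamma^2$-contraction, and the Banach fixed-point theorem gives existence and uniqueness of $V_{\text{CAST}}$.

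First we check that the operators map $\mathcal B$ into itself. We assume, as is standard, that the reward is bounded, $|R(z,a)|\le R_{\max}$, and that $\gamma\in[0,1)$. Then for $f\in\mathcal B$ we have $|(\mathcal T_\beta f)(z)|\le R_{\max}+\gamma\|f\|_\infty$ for every $z$. The same bound holds for $\mathcal T_\pi$, so both operators preserve boundedness. Measurability of $z\mapsto(\mathcal T f)(z)$ follows from measurability of the policy kernels and of $d$. We state this as a standing regularity assumption rather than a substantive step.

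The key step is the contraction estimate. Fix $f,g\in\mathcal B$ and a state $z$. The reward terms cancel, and since the dynamics are deterministic,
\begin{equation*}
|(\mathcal T_\beta f)(z)-(\mathcal T_\beta g)(z)|
=\gamma\Big|\mathbb E_{a\sim\beta(\cdot\mid z)}\big[f(d(z,a))-g(d(z,a))\big]\Big|
\le\gamma\|f-g\|_\infty .
\end{equation*}
Here we used Jensen's inequality, i.e.\ $|\mathbb E X|\le\mathbb E|X|$, together with the pointwise bound $|f-g|\le\|f-g\|_\infty$. Taking the supremum over $z$ gives $\|\mathcal T_\beta f-\mathcal T_\beta g\|_\infty\le\gamma\|f-g\|_\infty$, and the identical argument applies to $\mathcal T_\pi$. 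The stochasticity of the two policies therefore causes no difficulty: each operator is an expectation of a $\gamma$-Lipschitz map. Composing the two bounds yields
\begin{equation*}
\|\mathcal T_\beta\mathcal T_\pi f-\mathcal T_\beta\mathcal T_\pi g\|_\infty
\le\gamma\|\mathcal T_\pi f-\mathcal T_\pi g\|_\infty
\le\gamma^2\|f-g\|_\infty .
\end{equation*}

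Since $\mathcal B$ with the sup norm is complete and $\gamma^2<1$, the Banach fixed-point theorem yields a unique $V_{\text{CAST}}\in\mathcal B$ satisfying~\eqref{eq:hybrid_fixed_point}. It also shows that the iterates $\mathcal T_{\mathrm{CAST}}^n f$ converge to $V_{\text{CAST}}$ at rate $\gamma^{2n}$ from any bounded initialization.

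To connect this with~\eqref{eq:alternating_policy_sequence}, we would unroll the fixed-point equation. This expresses $V_{\text{CAST}}(z)$ as the expected discounted return of the nonstationary policy that applies $\beta$ at even steps and $\pi_\theta$ at odd steps. The truncation error after $2n$ steps is $\gamma^{2n}\|V_{\text{CAST}}\|_\infty$, which vanishes as $n\to\infty$.

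The only genuine obstacle is the setup rather than the estimate itself. The statement implicitly needs bounded rewards, $\gamma<1$, and a function class (all bounded functions, by the exact-representation assumption) that is closed under $\mathcal T_{\mathrm{CAST}}$ and complete. We would state these assumptions explicitly at the start of the proof.
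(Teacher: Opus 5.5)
Your proposal is correct and follows the paper's proof: you show that $\mathcal T_\beta$ and $\mathcal T_\pi$ are each $\gamma$-contractions in the supremum norm, compose them to get a $\gamma^2$-contraction, and invoke the Banach fixed-point theorem. The only differences are that you verify the one-step contraction directly via $|\mathbb E X|\le\mathbb E|X|$ instead of citing it, and you make explicit the bounded-reward, $\gamma<1$, and completeness assumptions that the paper leaves implicit.
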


\begin{proof}
Expanding the composition gives
\begin{equation}
\begin{aligned}
&(\mathcal T_\beta\mathcal T_\pi V)(z_0) = \mathbb E_{a_0\sim\beta(\cdot\mid z)}\Big[R(z_0,a_0)\\
&+\gamma\,\mathbb E_{a_1\sim\pi_\theta(\cdot\mid z_1)}\big[R(z_1,a_1)+\gamma V(z_2)\big]\Big],
\end{aligned}
\label{eq:hybrid_operator}
\end{equation}
where $z_1=d(z_0,a_0)$ and $z_2=d(z_1,a_1)$. Thus, one application of the CAST operator consists of one transition under the behavior policy followed by one transition under the current policy.

Both $\mathcal T_\beta$ and $\mathcal T_\pi$ are individually $\gamma$-contractions under the supremum norm \cite{Sutton1998}. Their composition is therefore a $\gamma^2$-contraction:
\begin{equation}
\begin{aligned}
\left\|\mathcal T_\beta \mathcal T_\pi V
      - \mathcal T_\beta \mathcal T_\pi U\right\|_\infty
&\leq
\gamma\left\|\mathcal T_\pi V
      - \mathcal T_\pi U\right\|_\infty \\
&\leq
\gamma^2\left\|V-U\right\|_\infty.
\end{aligned}
\label{eq:contraction}
\end{equation}
By the Banach fixed-point theorem, $\mathcal T_{\mathrm{CAST}}$ therefore admits a unique fixed point $V_{\text{CAST}}$ satisfying~\eqref{eq:hybrid_fixed_point}.
\end{proof}

Repeated application of the operator gives
\begin{equation}
V_{CAST}=(\mathcal T_\beta\mathcal T_\pi)^m V_{CAST},\qquad m\geq1,
\label{eq:operator_iteration}
\end{equation}
and expanding the resulting sequence yields
\begin{equation}
V_{CAST}(z)=\mathbb E\left[r_0^\beta+\gamma r_1^\pi+\gamma^2 r_2^\beta+\gamma^3 r_3^\pi+\cdots\right],
\label{eq:alternating_return}
\end{equation}
where the expectation is taken over the stochastic actions sampled from $\beta$ and $\pi_\theta$. Eq.~\eqref{eq:alternating_return} shows that $V_{\text{CAST}}$ discounts rewards under $\beta$ and $\pi_\theta$ in alternation, placing it between $V_\beta$ and $V_{\pi_\theta}$ rather than coinciding with either.

\definecolor{avgtint}{RGB}{236,242,250}
\definecolor{castchange}{RGB}{236,242,250}
\setlength{\fboxsep}{0pt}

\begin{algorithm}[t]
\caption{CAST: one training update. 
\colorbox{castchange}{Marks changes} relative to \cite{hansen2024tdmpc2} (Sec.~\ref{sec:background}).}
\label{alg:vpi}
\begin{algorithmic}[1]
\Input encoder $h_\xi$, dynamics $d_\xi$, reward $R_\xi$, 
\colorbox{castchange}{value ensemble} \colorbox{castchange}{$V_\psi$ \& target $\bar V_\psi$}, 
policy $\pi_\theta$; replay buffer $\mathcal D$, policy optimization horizon $H_\pi$

\Statex \textit{Plan and collect}
\State $z_t \gets h_\xi(s_t)$
\State $a_t \gets \mathrm{MPPI}\big(z_t;\,
\hat G(z_t,a_{t:t+H}) =
\textstyle\sum_{h=0}^{H-1}\gamma^h R_\xi(z_{t+h},a_{t+h})
+ \colorbox{castchange}{$\gamma^H \bar V_\psi(z_{t+H})$}\big)$
\Comment{Eq.~\eqref{eq:vpi_value}}

\State $s_{t+1},r_t \gets env.step(s_t,a_t)$

\State store $(s_t,a_t,r_t,s_{t+1})$ in $\mathcal D$

\Statex
\Statex \textit{Sample and build the hybrid value target}

\State $\{s_t,a_t,r_t,s_{t+1}\}_{t=0}^{H} \sim \mathcal D$

\State $z_t \gets h_\xi(s_t)$, \;
$z_{t+1} \gets h_\xi(s_{t+1})$, \;
\colorbox{castchange}{$\tilde a_{t+1} \gets \pi_\theta(z_{t+1})$}

\State \colorbox{castchange}{$
y_t \gets r_t + \gamma\big[
R_\xi(z_{t+1},\tilde a_{t+1})
+ \gamma\,\bar V_\psi(d_\xi(z_{t+1},\tilde a_{t+1}))
\big]$}

\Statex
\Statex \textit{Update world model and critic}

\State $\mathcal L \gets
\mathcal L_{\text{consist}}
+ \mathcal L_{\text{reward}}
+ \colorbox{castchange}{$\mathcal L_{\text{value}}(V_\psi,y)$}$

\State update $h_\xi,d_\xi,R_\xi,
\colorbox{castchange}{$V_\psi$}$
with $\nabla\mathcal L$
\Comment{single optimizer, $\pi_\theta$ excluded}

\Statex
\Statex \textit{Update policy: 1-step expanded gradient}

\For{$t=0,\dots,H_{\pi}$}
    \State \colorbox{castchange}{$a \gets \pi_\theta(z_t)$}
    \State \colorbox{castchange}{$
    J(z_t;\theta) \gets
    R_\xi(z_t,a)
    + \gamma\,\bar V_\psi(d_\xi(z_t,a))$}
    \Comment{Eq.~\eqref{eq:j1}}
\EndFor

\State $\mathcal L_\pi \gets
-\sum_{t=0}^{H}\rho^t
\big(
\alpha\,\mathcal H(\pi_\theta(\cdot|z_t))
+ \colorbox{castchange}{$J(z_t;\theta)$}
\big)$

\State update $\theta$ with $\nabla_\theta\mathcal L_\pi$

\Statex
\Statex \textit{Update target network}

\State $\bar\psi \gets \tau\psi+(1-\tau)\bar\psi$

\end{algorithmic}
\end{algorithm}

\subsection{Expanded Policy Gradient Through the World Model}
\label{sec:expanded}

Policy improvement follows the same idea as Sec.\ \ref{sec:qtov}, so wherever the base method would query $Q_\psi(z,\pi_\theta(z))$, we substitute the composed estimator $\hat Q$ of~\eqref{eq:composed_q}, evaluated at the policy action:
\begin{equation}
\hat J_1(z;\theta) = R_\xi\big(z,\pi_\theta(z)\big) + \gamma\,\bar V_\psi\big(d_\xi(z,\pi_\theta(z))\big).
\label{eq:j1}
\end{equation}
This is a one-step Bellman backup taken directly through the differentiable reward and dynamics heads. Recursively substituting~\eqref{eq:bellman_q}--\eqref{eq:bellman_v} into themselves $H_\pi$ times, with $a^{(i)} \sim \pi(\cdot\mid z^{(i)})$ and $z^{(i+1)} = d(z^{(i)}, a^{(i)})$ for $i=0,\dots,H_\pi-1$, gives the standard $k$-step Bellman identity for the value function $V_\pi$, for any $H_\pi\geq1$:
\begin{equation}
V_\pi(z) = \mathbb E\!\left[\sum_{i=0}^{H_\pi-1}\gamma^i R\big(z^{(i)},a^{(i)}\big) + \gamma^{H_\pi} V_\pi\big(z^{(H_\pi)}\big)\right].
\label{eq:kstep_identity}
\end{equation}
We use the corresponding learned-model estimator by replacing the true dynamics and reward with $d_\xi$ and $R_\xi$, the policy with $\pi_\theta$, and the terminal value with the frozen target critic $\bar V_\psi$, while evaluating a single sampled trajectory,
\begin{equation}
J_k(z^{0};\theta) \;=\; \sum_{i=0}^{H_\pi-1}\gamma^i\, R_\xi\big(z^{(i)}, a^{(i)}\big) \;+\; \gamma^{H_\pi}\, \bar V_\psi\big(z^{(H_\pi)}\big).
\label{eq:expanded_return}
\end{equation}
Using the reparameterized policy, gradients flow through the full imagined chain $\pi_\theta \to R_\xi \to d_\xi \to \cdots \to \bar V_\psi$. The world model is thus seen as a fixed, differentiable simulator for policy improvement. We treat $H_\pi{=}1$ as our primary configuration throughout, and study $H_\pi{>}1$ as an ablation in Sec.~\ref{sec:experiments}.

\subsection{Training Algorithm}

Algorithm~\ref{alg:vpi} summarizes one update of CAST. Data collection is unchanged, the planner of~\eqref{eq:vpi_value}, seeded by the policy, selects environment actions following an MPC scheme: only the first action is applied to the real environment. The only changes are the critic's input, and the value target and policy objective that follow from it, both derived above.

\begin{figure*}[t!]
\centering
\includegraphics[width=\textwidth]{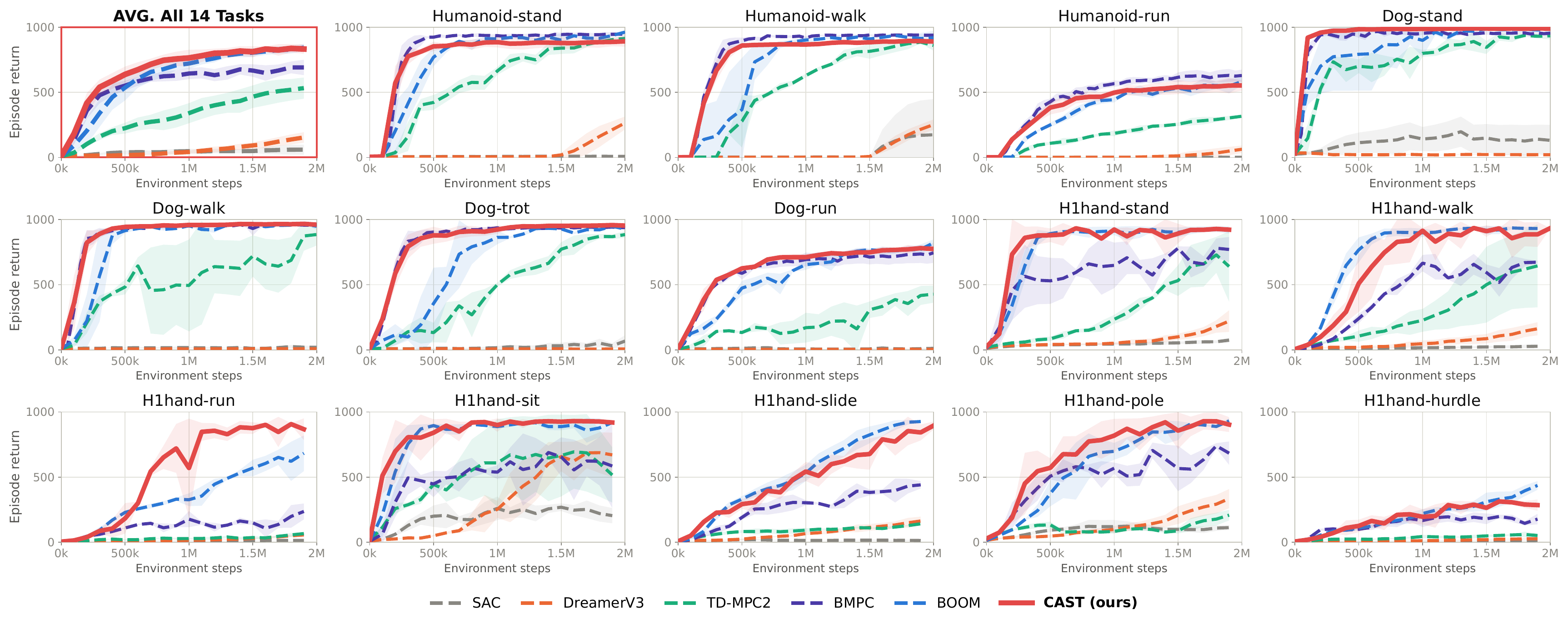}
\caption{Episode return vs.\ environment steps for our method against BOOM \cite{zhan2025boom} and BMPC \cite{wang2025bmpc} (with SAC \cite{haarnoja2018sac}, DreamerV3 \cite{hafner2023dreamerv3}, and TD-MPC2 \cite{hansen2024tdmpc2} also shown), on the 7 DMControl and 7 HumanoidBench tasks of Sec.~\ref{sec:exp_setup}. Top-Left panel: average return across the 14 tasks.}
\label{fig:grid}
\end{figure*}

\begin{figure}[t]
\centering
\includegraphics[width=\linewidth]{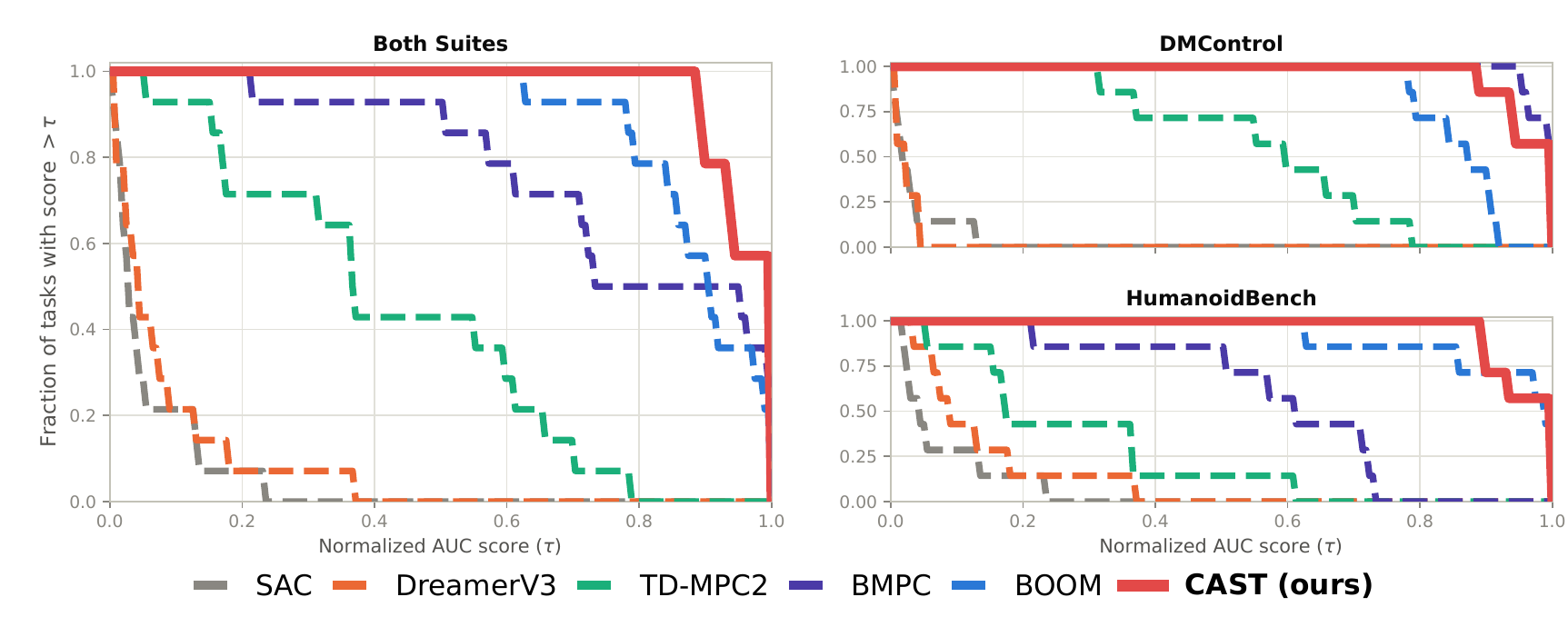}
\caption{\textbf{Performance profile of AUC-to-2M scores}, normalized per task by the best AUC any method reaches on that task: pooled across all 14 tasks (left), and per suite (top right: DMControl, bottom right: HumanoidBench). Each curve shows the fraction of tasks on which a method's normalized score exceeds $\tau$, for all five baselines and our method.}
\label{fig:auc_profile}
\end{figure}

\section{Experiments}
\label{sec:experiments}

We evaluate CAST on a diverse set of high-dimensional continuous-control tasks with the goal of answering three questions.
(i) Does replacing the action-value critic with alternating state-value targets improve sample efficiency?
(ii) Does this improvement preserve strong asymptotic performance?
(iii) What is the contribution of each component of CAST to the overall improvement?

\subsection{Experimental Setup}
\label{sec:exp_setup}

We compare CAST against five representative RL algorithms spanning both model-free and model-based approaches. SAC~\cite{haarnoja2018sac} provides a strong model-free baseline, while DreamerV3~\cite{hafner2023dreamerv3} represents latent imagination-based RL. Our primary comparisons, however, are against the recent family of planning-based methods: TD-MPC2~\cite{hansen2024tdmpc2}, BMPC~\cite{wang2025bmpc}, and BOOM~\cite{zhan2025boom}. These methods share the same latent world model and planning framework, making them the most relevant baselines for evaluating the proposed critic formulation. 

Experiments are conducted on the benchmark introduced by BOOM~\cite{zhan2025boom}, comprising seven DeepMind Control Suite~\cite{tassa2018dmcontrol} tasks and seven HumanoidBench~\cite{sferrazza2024humanoidbench} locomotion tasks. These domains feature large state and action spaces, making both planning and value estimation more challenging than in standard low-dimensional benchmarks.

Unless otherwise stated, CAST employs the alternating state-value target introduced in Section~\ref{sec:qtov} together with a one-step expanded policy gradient ($H_\pi=1$). All methods are evaluated using three random seeds, and shaded regions denote 95\% confidence intervals, computed as $\text{mean} \pm 1.96 \; \text{std-dev} / \sqrt{n}$, where $n$ is the number of seeds. Code will be made publicly available upon acceptance.

\subsection{Sample efficiency}
\label{sec:learning_dynamics}

The primary objective of CAST is to improve sample efficiency by learning a critic closer to the value of $\beta$, without the instability of bootstrapping from $\beta$ alone. We therefore begin by comparing the learning dynamics of all methods throughout training.

Fig.~\ref{fig:grid} reports learning curves for all fourteen benchmark environments, together with the average return across tasks shown in the upper-left panel. CAST matches or exceeds the performance of the selected baselines on the majority of environments and consistently attains high returns earlier during training. These improvements are most evident on several HumanoidBench tasks, including \texttt{H1hand-run}, \texttt{H1hand-pole}, and \texttt{Dog-run}, where CAST reaches strong performance with fewer environment interactions than BMPC and BOOM. Similar trends are observed across the DeepMind Control Suite, indicating that the proposed critic formulation improves learning efficiency across domains with different dynamics and action dimensionalities.

Fig.~\ref{fig:auc_profile} reports a performance profile of AUC scores (Area Under the Curve): for each task, the area under a method's mean return curve from 0 to 2M steps, divided by the number of training steps and normalized by the best AUC any method reaches on that task, then pooled across all 14 tasks and per suite. For a threshold $\tau$ swept from 0 to 1, each curve plots the fraction of tasks on which a method's normalized score exceeds $\tau$. On DMControl, our curve and BMPC's both remain at 1.0 up to $\tau\approx0.9$; BOOM's curve falls off earlier. On HumanoidBench, our curve remains at 1.0 up to $\tau\approx0.85$; BMPC's falls off earliest among the three methods. In average, CAST consistently achieves the best AUC performance profile among all the baselines, confirming that the improvements observed in the learning curves translate into faster policy learning.

\subsection{Final Performance Comparison}
\label{sec:main_results}

Table~\ref{tab:overall_avg_1M_2M_top3_tinted} reports the average episode return after one and two million environment interactions. At 1M interactions, CAST achieves the highest average performance across the benchmark, outperforming both BMPC and BOOM while obtaining the best results on several HumanoidBench environments. These results are consistent with the learning curves presented in the previous section and indicate that the improved learning dynamics of CAST translate into stronger intermediate performance.

After 2M interactions, all planning-based methods approach convergence and the performance gap correspondingly decreases. Nevertheless, CAST remains competitive across the benchmark, outperforming BMPC and achieving performance comparable to BOOM.

\definecolor{checkpointtint}{RGB}{255, 255, 255} 
\definecolor{avgtint}{RGB}{236,242,250}

\begin{table*}[t]
\centering
\caption{
Episode return after 1M and 2M environment interactions
(mean $\pm$95\% CI over seeds).
Best results are shown in \textbf{bold}; second-best are
\underline{underlined}.}
\label{tab:overall_avg_1M_2M_top3_tinted}
\renewcommand{\arraystretch}{0.85}
\resizebox{\textwidth}{!}{
\begin{tabular}{cl cccc|>{\columncolor{checkpointtint}}c>{\columncolor{checkpointtint}}c>{\columncolor{checkpointtint}}c>{\columncolor{checkpointtint}}c}
\toprule
& & \multicolumn{4}{c|}{\textbf{1M Environment steps}}
& \multicolumn{4}{c}{\textbf{2M Environment steps}} \\
\cmidrule(r){3-6}
\cmidrule(l){7-10}
& Task
& TD-MPC2
& BMPC
& BOOM
& \textbf{CAST}
& TD-MPC2
& BMPC
& BOOM
& \textbf{CAST}
\\
\midrule
\multirow{7}{*}{\rotatebox{90}{\textit{DMControl}}}
& Humanoid-stand
& 664 $\pm$ 30
& \textbf{935 $\pm$ 10}
& \underline{910 $\pm$ 19}
& 885 $\pm$ 40
& 913 $\pm$ 17
& \underline{946 $\pm$ 17}
& \textbf{962 $\pm$ 12}
& 891 $\pm$ 41
\\
& Humanoid-walk
& 628 $\pm$ 15
& \textbf{917 $\pm$ 12}
& \underline{901 $\pm$ 22}
& 867 $\pm$ 20
& 860 $\pm$ 40
& \textbf{939 $\pm$ 4}
& \underline{918 $\pm$ 53}
& 890 $\pm$ 13
\\
& Humanoid-run
& 185 $\pm$ 22
& \textbf{566 $\pm$ 24}
& 443 $\pm$ 21
& \underline{497 $\pm$ 23}
& 316 $\pm$ 10
& \textbf{629 $\pm$ 46}
& \underline{583 $\pm$ 30}
& 552 $\pm$ 39
\\
& Dog-stand
& 799 $\pm$ 26
& \underline{953 $\pm$ 11}
& 898 $\pm$ 124
& \textbf{987 $\pm$ 4}
& 933 $\pm$ 28
& 954 $\pm$ 19
& \underline{985 $\pm$ 3}
& \textbf{988 $\pm$ 5}
\\
& Dog-walk
& 494 $\pm$ 352
& 952 $\pm$ 15
& \underline{952 $\pm$ 5}
& \textbf{958 $\pm$ 16}
& 885 $\pm$ 85
& \textbf{962 $\pm$ 11}
& 949 $\pm$ 27
& \underline{961 $\pm$ 3}
\\
& Dog-trot
& 500 $\pm$ 15
& \textbf{936 $\pm$ 10}
& 863 $\pm$ 53
& \underline{923 $\pm$ 32}
& 884 $\pm$ 25
& \underline{939 $\pm$ 13}
& 927 $\pm$ 9
& \textbf{953 $\pm$ 15}
\\
& Dog-run
& 170 $\pm$ 114
& \underline{693 $\pm$ 54}
& 654 $\pm$ 41
& \textbf{711 $\pm$ 25}
& 427 $\pm$ 66
& 745 $\pm$ 78
& \textbf{821 $\pm$ 26}
& \underline{774 $\pm$ 25}
\\
\midrule
\multirow{7}{*}{\rotatebox{90}{\textit{HumanoidBench}}}
& H1hand-stand
& 234 $\pm$ 34
& 648 $\pm$ 175
& \underline{912 $\pm$ 32}
& \textbf{923 $\pm$ 11}
& 640 $\pm$ 270
& 770 $\pm$ 96
& \underline{920 $\pm$ 28}
& \textbf{922 $\pm$ 7}
\\
& H1hand-walk
& 226 $\pm$ 88
& 665 $\pm$ 63
& \underline{898 $\pm$ 33}
& \textbf{914 $\pm$ 22}
& 644 $\pm$ 318
& 673 $\pm$ 12
& \underline{931 $\pm$ 12}
& \textbf{934 $\pm$ 2}
\\
& H1hand-run
& 28 $\pm$ 4
& 177 $\pm$ 62
& \underline{326 $\pm$ 49}
& \textbf{570 $\pm$ 379}
& 66 $\pm$ 9
& 236 $\pm$ 61
& \underline{682 $\pm$ 136}
& \textbf{867 $\pm$ 83}
\\
& H1hand-sit
& 609 $\pm$ 336
& 538 $\pm$ 52
& \underline{888 $\pm$ 16}
& \textbf{899 $\pm$ 51}
& 516 $\pm$ 212
& 585 $\pm$ 322
& \underline{918 $\pm$ 5}
& \textbf{919 $\pm$ 15}
\\
& H1hand-slide
& 92 $\pm$ 17
& 303 $\pm$ 20
& \underline{531 $\pm$ 63}
& \textbf{543 $\pm$ 73}
& 141 $\pm$ 18
& 440 $\pm$ 29
& \textbf{926 $\pm$ 9}
& \underline{895 $\pm$ 31}
\\
& H1hand-pole
& 83 $\pm$ 15
& 568 $\pm$ 43
& \underline{698 $\pm$ 64}
& \textbf{818 $\pm$ 128}
& 208 $\pm$ 40
& 683 $\pm$ 90
& \textbf{930 $\pm$ 21}
& \underline{903 $\pm$ 57}
\\
& H1hand-hurdle
& 44 $\pm$ 13
& 166 $\pm$ 50
& \textbf{220 $\pm$ 33}
& \underline{195 $\pm$ 58}
& 51 $\pm$ 14
& 177 $\pm$ 40
& \textbf{436 $\pm$ 34}
& \underline{296 $\pm$ 66}
\\
\midrule
\multicolumn{2}{l}{\cellcolor{avgtint}\textbf{Average (14 Tasks)}}
& \cellcolor{avgtint}340 $\pm$ 77
& \cellcolor{avgtint}644 $\pm$ 43
& \cellcolor{avgtint}\underline{721 $\pm$ 41}
& \cellcolor{avgtint}\textbf{764 $\pm$ 63}
& \cellcolor{avgtint}535 $\pm$ 82
& \cellcolor{avgtint}691 $\pm$ 60
& \cellcolor{avgtint}\textbf{849 $\pm$ 29}
& \cellcolor{avgtint}\underline{839 $\pm$ 29}
\\
\bottomrule
\end{tabular}
}
\end{table*}

\subsection{Ablation Study}
\label{sec:ablations}

We isolate the contribution of each design choice in Secs.~\ref{sec:qtov}--\ref{sec:expanded}. All ablations use the same architecture, planner, and training budget as the main comparison, varying only the axis under study.

\paragraph{Hybrid value target} We compare our hybrid target against two simpler alternatives: bootstrapping purely off-policy from the behavior policy $\beta$ buffer,
\begin{equation}
y_t^{\text{off}} = r_t + \gamma \bar V_\psi(z_{t+1}),
\end{equation}
and bootstrapping purely on-policy by resampling the action under $\pi_\theta$ at the real state $z_t$, following the target used in BMPC \cite{wang2025bmpc},
\begin{equation}
y_t^{\text{on}} = R_\xi(z_t, \tilde a_t) + \gamma \bar V_\psi\big(d_\xi(z_t, \tilde a_t)\big), \quad \tilde a_t \sim \pi_\theta(\cdot \mid z_t).
\end{equation}
Fig.~\ref{fig:ablation_vtarget} reports this comparison on Dog-run and Humanoid-run, holding $H_\pi=1$ fixed. On both tasks, the purely off-policy variant is consistently the worst of the three throughout training, consistent with the mechanism described in Sec.~\ref{sec:intro}: bootstrapping directly from $V_\beta$ yields abrupt policy updates that destabilize learning. The purely on-policy variant is more competitive but still lags behind.

\paragraph{Behavior-to-policy ratio} We further vary the number of real transitions under $\beta$ preceding the single imagined step under $\pi_\theta$, denoted $H_V \in \{1, 2, 3\}$, with $H_V=1$ recovering~\eqref{eq:y_2step}:
\begin{equation}
\begin{split}
y_t^{(H_V)} = \sum_{i=0}^{H_V-1} \gamma^i r_{t+i} \;+\; \gamma^{H_V}\Big[R_\xi(z_{t+H_V}, \tilde a_{t+H_V}) \\
{}+ \gamma \bar V_\psi(\tilde z_{t+H_V+1})\Big], \quad \tilde a_{t+H_V} \sim \pi_\theta(\cdot \mid z_{t+H_V}).
\end{split}
\end{equation}
Fig.~\ref{fig:ablation_hv} reports this comparison on Dog-run and Humanoid-run. On Dog-run, $H_V=2$ and $H_V=3$ remain competitive with $H_V=1$, while on Humanoid-run both alternatives lag behind throughout training, indicating that anchoring the target to a single real transition before reintroducing $\pi_\theta$ is a well balanced choice.

\begin{figure}[t]
\centering
\includegraphics[width=\linewidth]{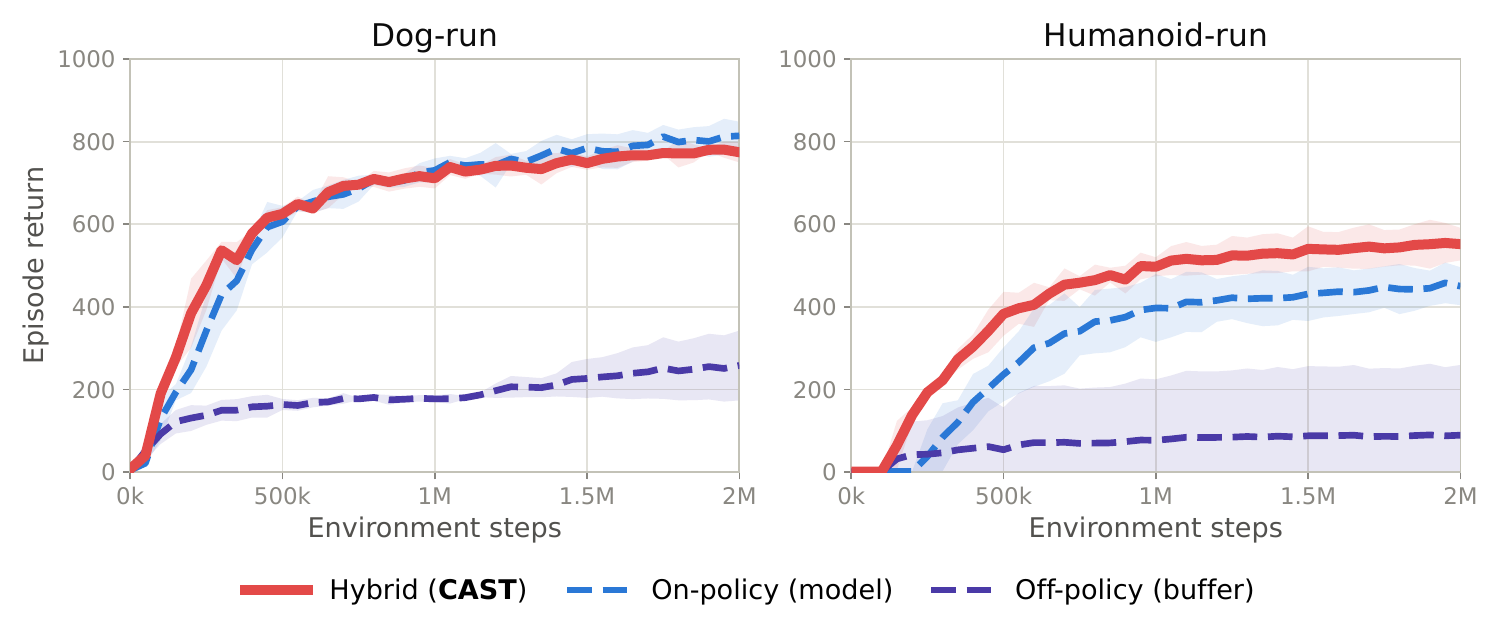}
\caption{\textbf{Value-target ablation} on Dog-run (left) and Humanoid-run (right): our hybrid target vs.\ purely on-policy and purely off-policy alternatives, $H_\pi=1$ throughout. Mean $\pm$ 95\% CI over three seeds.}
\label{fig:ablation_vtarget}
\end{figure}

\begin{figure}[t]
\centering
\includegraphics[width=\linewidth]{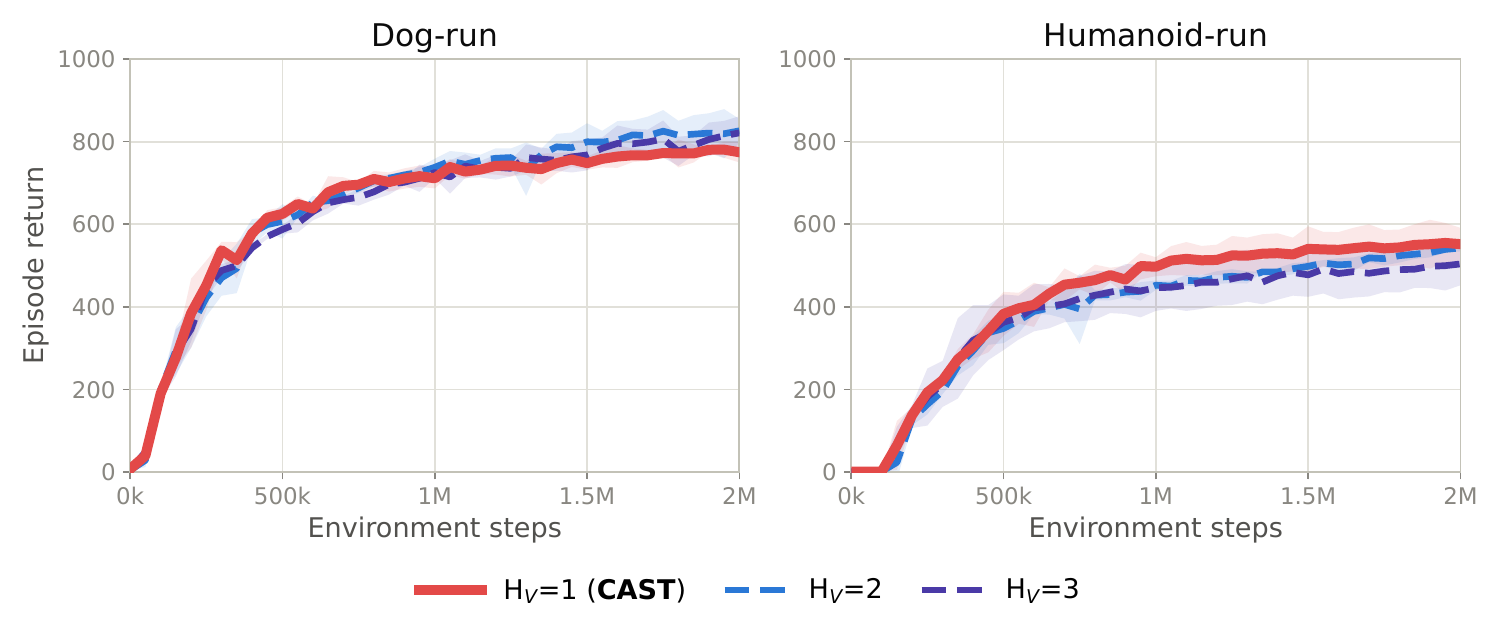}
\caption{\textbf{Behavior-to-policy ratio ablation} on Dog-run (left) and Humanoid-run (right): $H_V=1$ (\textbf{CAST}) vs. $H_V=2$ and $H_V=3$, i.e. anchoring the value target to more real transitions under $\beta$ before the single imagined step under $\pi_\theta$. Mean $\pm$ 95\% CI over three seeds.}
\label{fig:ablation_hv}
\end{figure}

\begin{figure}[t]
\centering
\includegraphics[width=\linewidth]{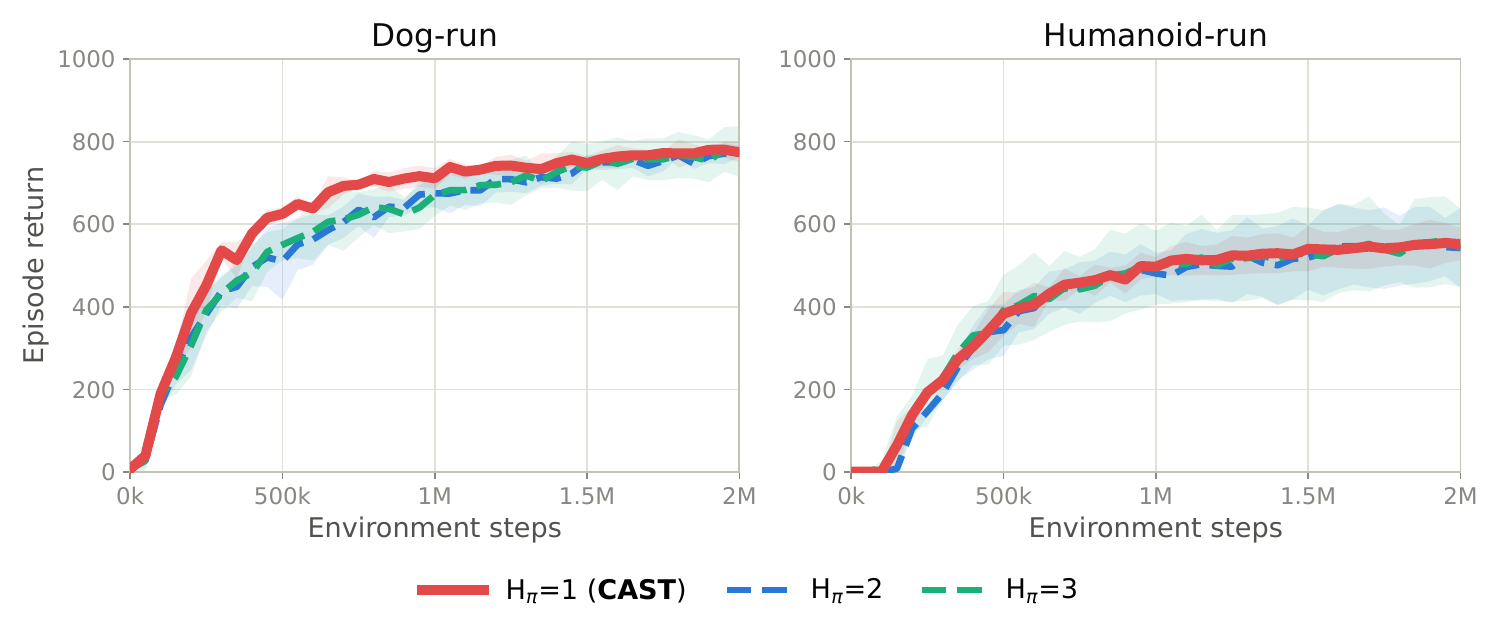}
\caption{\textbf{Expanded-gradient-horizon ablation} on Dog-run (left) and Humanoid-run (right): $H_\pi=1$ (ours) vs.\ $H_\pi=2$ and $H_\pi=3$, hybrid value target fixed throughout. Mean $\pm$ 95\% CI over three seeds.}
\label{fig:ablation_pihorizon}
\end{figure}

\paragraph{Expanded gradient horizon} We vary $k = H_\pi \in \{1,2,3\}$, with $k{=}1$ as the reference configuration used everywhere else in this paper, holding the hybrid value target of Eq.~\eqref{eq:y_2step} fixed. This isolates the effect of unrolling the policy gradient further through the learned model before bootstrapping. Fig.~\ref{fig:ablation_pihorizon} reports this comparison on the same two tasks. On both, $k{=}2$ and $k{=}3$ track each other almost exactly throughout, and track $k{=}1$ closely as well, with $k{=}1$ holding a small lead on both tasks and with smaller CI.

\begin{figure*}[t!]
\centering
\includegraphics[width=\linewidth]{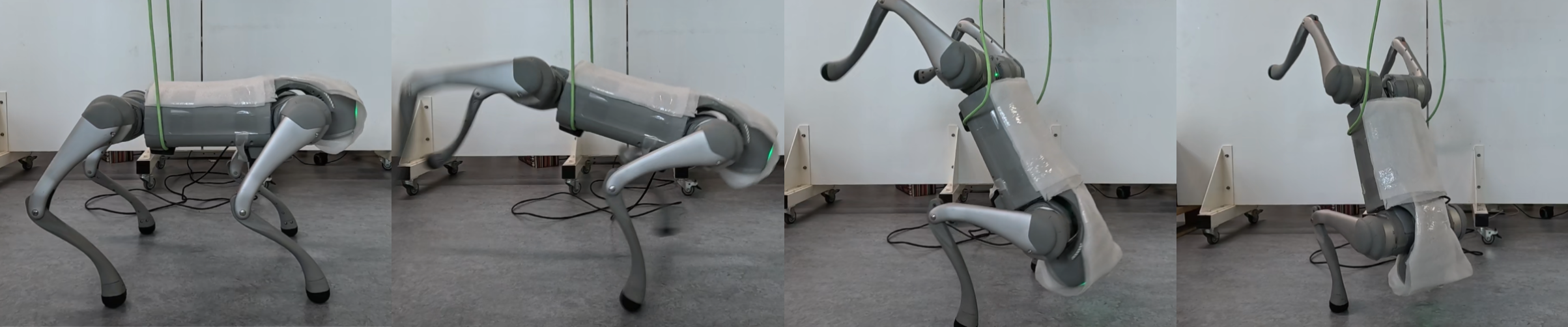}
\caption{\textbf{Hardware deployment.} Snapshots of the learned policy $\pi_\theta$ executing a dynamic handstand on the Unitree Go2. See companion video.}
\label{fig:real_world}
\end{figure*}

\subsection{Real-World Deployment}
\label{sec:deploy-results}

To demonstrate the practical applicability of CAST, we deploy a policy trained entirely in simulation on a Unitree Go2 quadruped. The task consists of performing a dynamic handstand by lifting the rear legs and balancing only on the front legs. Training combines domain randomization over the robot's dynamics with a curriculum that ramps up randomization strength as the policy's stand success rate improves, and the trained policy $\pi_\theta$ is deployed directly on hardware without online planning.

Figure~\ref{fig:real_world} shows the deployment setup and representative snapshots of the learned behavior. The policy successfully transfers from simulation to hardware and executes stable handstand motions in real time. While this experiment is intended as a proof of deployment rather than a comparative evaluation, it demonstrates that CAST produces policies robust enough for direct real-world transfer.

\addtolength{\textheight}{-0.7cm}

\section{Conclusion}

We presented CAST, a planning-based model-based RL framework that replaces the action-value critic with a state-value formulation trained toward the value of the behavior policy $\beta$. Since bootstrapping directly from $\beta$ performs poorly in practice, we introduced a regularized hybrid Bellman target that mixes real transitions under $\beta$ with imagined transitions under the current policy obtained by k-steps rollout with the learned dynamics, with a characterized fixed point. Experiments on high-dimensional continuous-control tasks show strong sample efficiency and competitive final performance against established model-free and model-based baselines. We further demonstrated CAST on a Unitree Go2 quadruped using asynchronous predictive MPC, showing that planning latency can be hidden from the control loop without additional supervision.

\bibliographystyle{ieeetr-mod}
\bibliography{vpi_paper}

\end{document}